\newtheorem{theorem}{Theorem}
\newcolumntype{Y}{>{\raggedright\arraybackslash}X}
\title{Addressing Logical Fallacies In Scientific Reasoning From Large Language Models: Towards a Dual-Inference Training Framework}
\author{
Peter B. Walker, Ph.D.\thanks{Intelligenesis LLC, \texttt{pete.walker@intelligenesisllc.com}}
\and
Hannah Davidson\thanks{Student Intern, \texttt{hannah.davidson.college@gmail.com}}
\and
Aiden Foster\thanks{Student Intern, \texttt{aidenfoster308@gmail.com}}
\and
Matthew Lienert\thanks{Intelligenesis LLC, \texttt{matt.lienert@intelligenesisllc.com}}
\and
Thomas Pardue\thanks{Intelligenesis LLC, \texttt{thomas.pardue@intelligenesisllc.com}}
\and
Dale Russell, Ph.D.\thanks{Uniformed Services University, \texttt{dale.w.russell1.civ@health.mil}}
}
\date{\today}
\begin{document}
\maketitle

\begin{abstract}
Large Language Models (LLMs) have transformed natural language processing and hold growing promise for advancing science, healthcare, and decision-making. Yet their training paradigms remain dominated by affirmation-based inference, akin to \textit{modus ponens}, where accepted premises yield predicted consequents. While effective for generative fluency, this one-directional approach leaves models vulnerable to logical fallacies, adversarial manipulation, and failures in causal reasoning. This paper makes two contributions. First, it demonstrates how existing LLMs from major platforms exhibit systematic weaknesses when reasoning in scientific domains with negation, counterexamples, or faulty premises.\footnote{Code to recreate these experiments is available at \\\url{https://github.com/hannahdavidsoncollege-maker/ScientificReasoningForEnvironment-MedicineWithLLMs}} Second, it introduces a dual-reasoning training framework that integrates affirmative generation with structured counterfactual denial. Grounded in formal logic, cognitive science, and adversarial training, this training paradigm formalizes a computational analogue of ``denying the antecedent'' as a mechanism for disconfirmation and robustness. By coupling generative synthesis with explicit negation-aware objectives, the framework enables models that not only affirm valid inferences but also reject invalid ones, yielding systems that are more resilient, interpretable, and aligned with human reasoning.
\end{abstract}

\section{Introduction}

Recent advances in large language models (LLMs) such as GPT-5, LLaMA, and Gemini demonstrate remarkable progress in natural language generation, reasoning, and generalization. These systems are trained on massive corpora with objectives such as autoregressive prediction, masked language modeling, and next-sentence prediction. At their core, such models estimate the most probable continuation of a linguistic sequence, reflecting a probabilistic analogue of \textit{modus ponens} logic: if $P \implies Q$ and $P$ holds, then $Q$ is predicted. This affirmation-based paradigm has fueled generative fluency across applications ranging from dialogue to scientific writing.

However, reliance on affirmation alone exposes critical weaknesses, particularly in scientific domains where causal reasoning, counterfactuals, and robustness are essential. Models trained only to affirm likely consequents often overgeneralize from correlations, misattribute causal direction, or fail to reject faulty premises. For example, when asked whether a patient with traumatic brain injury (TBI) can develop post-traumatic stress disorder (PTSD) ($P \implies Q$), an LLM may correctly affirm the association \cite{walker2017decision, bai2017unsupervised}. Yet without exposure to negative or counterfactual cases, the same model may incorrectly infer that PTSD implies a prior TBI ($Q \implies P$), or that the absence of TBI guarantees the absence of PTSD ($\neg P \implies \neg Q$). 

Recent reports highlight that young users sometimes treat AI chatbots as close companions, with detrimental psychological outcomes \citep{horton2023ai, smith2024chatbot}. These concerns underscore the urgency of ensuring that LLMs reason transparently and safely, particularly when deployed in sensitive domains. Such errors illustrate how statistical regularities in training data can mislead reasoning, leading to logical fallacies such as affirming the consequent, denying the antecedent, or reversing causality.

Table~\ref{tab:patterns} enumerates these patterns, contrasting valid inference with fallacies that have been frequently observed in LLM outputs \citep{wei2022chain, ji2023survey}.

Building on insights from cognitive science and philosophy, we argue that these so-called fallacies may hold computational value. Human reasoning thrives not only on confirmation but also on disconfirmation: generating counterfactuals, testing negated premises, and learning from absence \cite{kahneman1973prediction, roese1997counterfactual}. Popper’s falsificationist philosophy \cite{popper2002logic} similarly emphasizes the scientific imperative of testing hypotheses against potential refutation. In machine learning, analogous mechanisms appear in adversarial training, out-of-distribution generalization \cite{geirhos2020shortcut}, and contrastive learning. Together, these traditions suggest that training models to engage with negation and denial is not a flaw but a pathway to greater robustness.

This paper advances that pathway by introducing a \textbf{dual-reasoning framework} for LLMs. The framework formalizes a taxonomy of logical patterns, extends training objectives beyond affirmation, and operationalizes a computational analogue of denying the antecedent. Through this dual approach, models can affirm valid consequents while simultaneously learning to reject invalid inferences, improving resilience, interpretability, and alignment with human reasoning.

The remainder of the paper is structured as follows. Section~2 reviews foundations in psychology, philosophy, and machine learning that motivate dual reasoning. Section~3 introduces our logical taxonomy and illustrates its application in medical and AI contexts. Section~4 presents the dual-reasoning training paradigm, including mathematical formalization and proof of representational benefit. Section~5 discusses evaluation, limitations, and implications for AI safety and scientific discovery, and Section~6 concludes with directions for future research.

\begin{table}[t]
    \centering
    \begin{tabular}{|c|c|c|}
    \hline
    \textbf{Logical Rule} & \textbf{Affirmative Generation} & \textbf{Counterfactual Denial}\\ \hline
    $P \implies Q$ & TBI $\implies$ PTSD & Modus Ponens  \\
    $P \implies \neg Q$ & TBI $\implies$ No PTSD & Counterexample Learning \\ 
    $\neg P \implies Q$ & No TBI $\implies$ PTSD & Exception Modeling \\
    $\neg P \implies \neg Q$ & No TBI $\implies$ No PTSD & Baseline Consistency \\
    $Q \implies P$ & PTSD $\implies$ TBI & Inverse Error Detection \\
    $Q \implies \neg P$ & PTSD $\implies$ No TBI & Inverse Counterexample \\
    $\neg Q \implies P$ & No PTSD $\implies$ TBI & Hidden Cause Detection \\
    $\neg Q \implies \neg P$ & No PTSD $\implies$ No TBI & Negation Consistency \\ \hline
    \end{tabular}
    \caption{Logical patterns and how dual-reasoning training addresses them.}
    \label{tab:patterns}
\end{table}
As shown in Table~\ref{tab:patterns}, these logical patterns illustrate the contrasts between valid inference and the types of fallacies that LLMs often generate. A more detailed discussion of these patterns is provided in Section~\ref{sec:patterns}.

\subsection{Background}

\subsection{From Modus Ponens to a Need for Logical Negation in LLMs}
Contemporary LLMs have largely been developed under the paradigm of \textit{modus ponens} reasoning, where an accepted premise leads to the most likely consequent (“If P then Q; P; therefore Q”). This structure is evident in the architecture and training methodologies employed. Transformer-based LLMs, with their attention mechanisms \citep{vaswani2017attention}, learn to map input token sequences (serving as premises, “P”) to highly probable output sequences (serving as consequents, “Q”). This mapping is probabilistic, reflecting the statistical regularities of language rather than strict deduction. Relatedly, Generative Adversarial Networks (GANs)  \citep{goodfellow2014generative} employ a generative–discriminative loop, in which the generator produces candidate outputs from a learned distribution and the discriminator evaluates their validity, reinforcing patterns consistent with the training data.

While effective, this emphasis on affirming consequents leaves a critical gap in the logical reasoning capacities of LLMs. Specifically, current models struggle to engage with negation in a systematic way, limiting their ability to handle counterexamples, reason about exceptions, and maintain robustness in the face of adversarial inputs. To address these limitations, this paper proposes incorporating a computational analogue of “denying the antecedent.” Although a fallacy in formal logic, reframing it within a probabilistic learning paradigm offers a pathway toward more flexible, resilient, and context-sensitive reasoning.

Reinterpreting ``Denial of the Antecedent" for Computational Benefit: In classical logic, "denying the antecedent" ("If P then Q; not P; therefore not Q") constitutes a formal fallacy. Negating the antecedent of a true implication does not guarantee the negation of the consequent. However, shifting from the realm of absolute deduction to the probabilistic framework under which LLMs operate opens up new avenues for reinterpreting this logical form.

Within probabilistic reasoning and causal inference, negating a premise (P) can still provide valuable information about the probability of the consequent (Q) \citep{pearl2009causality}. Furthermore, cognitive science provides compelling evidence for the crucial role of negation in human learning. Studies highlight how negation contributes to the formation of categories, identification of exceptions, and establishment of conceptual boundaries \citep{horn1989natural, givon1993english, glymour2016causality}. These insights from probabilistic reasoning and cognitive science lay the groundwork for a computational reinterpretation of "denying the antecedent," not as a logical fallacy to be avoided, but as a potentially valuable mechanism for enriching LLM reasoning.

\section{Materials and Methods}
To motivate the need for improved training schemes, we empirically examined the prevalence of logical fallacies in contemporary LLMs. We do not claim universality (that these fallacies exist in all reasoning domains); rather, we focus on two scientific domains (medical science and environmental science) where known causal relations are well documented.

We compiled 100 canonical statements from each domain, all of the form $P \rightarrow Q$, drawn from authoritative textbooks and domain reviews (see Supplementary Materials).

Representative subsets are shown in Tables \ref{tab:mstatements} and \ref{tab:estatements}. For each $P \implies Q$, we generated the eight logical variants by rearranging or negating $P$ and $Q$. We then queried each LLM with statements such as \texttt{Is the statement ``No TBI implies no PTSD'' correct?}, recording whether the model judged the statement as true.

Tables \ref{tab:mresults} and \ref{tab:eresults} summarize performance across four LLMs of varying scale. While models reliably affirmed valid consequents ($P \implies Q$), they frequently misclassified counterfactual or negated variants, producing fallacies such as affirming the consequent, denying the antecedent, or reversing causality. These findings underscore a systematic gap: current LLMs are biased toward affirmation and lack mechanisms for disconfirmation.

This analysis reinforces our central claim. If LLMs are trained only to affirm consequents, they will continue to conflate correlation with causation, overlook alternative explanations, and misinterpret negations. Addressing this vulnerability requires training paradigms that incorporate both affirmation and denial—the foundation of the dual-reasoning framework we propose.

\begin{table}[t]
  \centering
  \caption{Example statements (5 of 100) from the medical science domain accepted as true.}
  \label{tab:mstatements}
  \begin{tabularx}{\linewidth}{@{}Y@{}}
    \toprule
    \texttt{Atherosclerosis $\implies$ increased risk of heart attack.} \\
    \texttt{High blood pressure $\implies$ increased risk of stroke.} \\
    \texttt{Insulin resistance $\implies$ increased risk of type 2 diabetes.} \\
    \texttt{Chronic inflammation $\implies$ increased risk of autoimmune diseases.} \\
    \texttt{Smoking $\implies$ increased risk of lung cancer.} \\
    \bottomrule
  \end{tabularx}
\end{table}

\begin{table}[t]
  \centering
  \caption{Example statements (5 of 100) from the environmental science domain accepted as true.}
  \label{tab:estatements}
  \begin{tabularx}{\linewidth}{@{}Y@{}}
    \toprule
    \texttt{Melting of polar ice caps $\implies$ rising global sea levels.} \\
    \texttt{Increased CO\textsubscript{2} concentration $\implies$ ocean acidification.} \\
    \texttt{Extreme weather events $\implies$ significant economic and social disruption.} \\
    \texttt{Deforestation $\implies$ reduced carbon sequestration.} \\
    \texttt{Increased greenhouse gas emissions $\implies$ enhanced greenhouse effect.} \\
    \bottomrule
  \end{tabularx}
\end{table}

\begin{table*}[t]
  \centering
  \caption{Fraction of 100 medical statements that the LLM said was \texttt{TRUE} across logical rules. Rows marked with * are fallacies/invalid inferences; only $P \implies Q$ is valid. The last column sums the errors across all models: for the first rule the difference with 1 and for the remainder the difference with 0. }
  \label{tab:mresults}
  \begin{tabular}{@{}l
                  S[table-format=0.6]
                  S[table-format=0.6]
                  S[table-format=0.6]
                  S[table-format=0.6]
                  S[table-format=0.6]@{}}
    \toprule
    \textbf{Rule} & {\textbf{GPT-2 (774M)}} & {\textbf{LLaMA 3 (8B)}} & {\textbf{Gemma 3 (12B)}} & {\textbf{Mistral (7B)}} & \textbf{Error} \\
    \midrule
    $P \implies Q$                         & 0.89 & 0.96 & 0.99 & 0.98 & 0.18\\
    $*\,P \implies \neg Q$                 & 0.43 & 0.10 & 0.06 & 0.12 & 0.71 \\
    $*\,\neg P \implies Q$                 & 0.48 & 0.43 & 0.39 & 0.41 & 1.71 \\
    $*\,\neg P \implies \neg Q$            & 0.67 & 0.41 & 0.43 & 0.56 & 2.17 \\
    $*\,Q \implies P$                      & 0.64 & 0.54 & 0.59 & 0.41 & 2.18 \\
    $*\,Q \implies \neg P$                 & 0.42 & 0.32 & 0.27 & 0.29 & 1.30 \\
    $*\,\neg Q \implies P$                 & 0.34 & 0.31 & 0.30 & 0.21 & 1.16 \\
    $*\,\neg Q \implies \neg P$            & 0.63 & 0.53 & 0.59 & 0.49 & 2.14 \\
    \bottomrule
  \end{tabular}
\end{table*}

\begin{table*}[t]
  \centering
  \caption{Fraction of 100 environmental statements that the LLM said was \texttt{TRUE} across logical rules. Rows marked with * are fallacies/invalid inferences; only $P \implies Q$ is valid. The last column sums the errors across all models: for the first rule the difference with 1 and for the remainder the difference with 0.}
  \label{tab:eresults}
  \begin{tabular}{@{}l
                  S[table-format=0.6]
                  S[table-format=0.6]
                  S[table-format=0.6]
                  S[table-format=0.6]
                  S[table-format=0.6]@{}}
    \toprule
    \textbf{Rule} & {\textbf{GPT-2 (774M)}} & {\textbf{LLaMA 3 (8B)}} & {\textbf{Gemma 3 (12B)}} & {\textbf{Mistral (7B)}} & \textbf{Error} \\
    \midrule
    $P \implies Q$                         & 0.76 & 0.94 & 0.99 & 0.97 & 0.34 \\
    $*\,P \implies \neg Q$                 & 0.42 & 0.15 & 0.05& 0.11  & 0.74\\
    $*\,\neg P \implies Q$                 & 0.53 & 0.56 & 0.50 & 0.52 & 2.11 \\
    $*\,\neg P \implies \neg Q$            & 0.64 & 0.04 & 0.05 & 0.13 & 0.86\\
    $*\,Q \implies P$                      & 0.65 & 0.66 & 0.62     & 0.71 & 2.54\\
    $*\,Q \implies \neg P$                 & 0.50 & 0.29 &   0.18   & 0.38 & 1.35 \\
    $*\,\neg Q \implies P$                 & 0.51 & 0.30 &      0.22  & 0.25 & 1.28\\
    $*\,\neg Q \implies \neg P$            & 0.67 & 0.36 & 0.26 & 0.35 & 1.64 \\
    \bottomrule
  \end{tabular}
\end{table*}

\section{Results Interpretation}

The results presented in Tables~\ref{tab:mresults} and \ref{tab:eresults} reveal several important trends in how current LLMs handle logical inference across domains. First, there is a clear correlation between model scale (as measured by parameter count) and overall accuracy. Smaller models such as GPT-2 (774M parameters) exhibit relatively weak performance, often misclassifying fallacious forms such as $P \implies \neg Q$ or $\neg P \implies Q$. In contrast, larger models like Gemma 3 (12B parameters) and LLaMA 3 (8B parameters) achieve near-ceiling performance on valid $P \implies Q$ statements and show measurable improvement on several of the fallacy categories. This pattern suggests that parameter scaling confers some advantage in distinguishing valid from invalid inferences, likely due to broader exposure to linguistic variation and implicit reasoning patterns during training. However, even the largest models tested continue to struggle on logically invalid forms, highlighting the limitations of scale alone in resolving these weaknesses.

Second, domain effects are evident when comparing medical science (Table~\ref{tab:mresults}) and environmental science (Table~\ref{tab:eresults}). Across models, errors on valid statements ($P \implies Q$) is typically lower in the medical domain, with multiple systems achieving near-perfect recognition of accepted causal links such as TBI $\implies$ PTSD. In contrast, performance on environmental statements shows greater variability, with GPT-2 and even mid-scale models like LLaMA 3 producing larger errors. This may reflect differences in training data coverage: medical associations (e.g., risk factors and outcomes) are heavily represented in biomedical literature and general corpora, whereas environmental causal chains (e.g., greenhouse gases $\implies$ global warming) may be expressed more variably or contested in the sources these models were trained on. These findings underscore the influence of domain-specific representation in LLM reasoning and suggest that robustness to logical fallacies may not generalize evenly across scientific fields.

Finally, the persistence of errors in fallacious categories across both domains demonstrates the structural bias of current LLMs toward affirmation. Regardless of scale or domain, models continue to misclassify patterns such as denying the antecedent ($\neg P \implies \neg Q$) or affirming the consequent ($Q \implies P$), reinforcing the central claim of this work: parameter growth improves surface accuracy but does not address the absence of explicit mechanisms for disconfirmation. These results motivate the dual-reasoning framework proposed here, which directly embeds negation-aware objectives to complement affirmation-based training.

\section{Overview and Motivation of Dual Reasoning Framework}
This research draws upon a diverse body of work, synthesizing insights from contrastive learning, adversarial training, neuro-symbolic AI, and cognitive science to motivate and contextualize the proposed dual-reasoning framework. A unifying theme across these areas is the role of negation, counterexamples, and disconfirmation in shaping robust representations.

\textbf{Contrastive Learning and the Power of Negative Examples.}
Contrastive learning methods such as SimCLR \citep{chen2020simple} and MoCo \citep{he2020momentum} rely on negative samples to structure discriminative feature spaces. These approaches train models not only to associate positives but also to separate unrelated examples, effectively encoding information about what a concept is \emph{not}. This principle aligns with our framework’s emphasis on “denial of the antecedent” as a computational mechanism: the explicit use of negative information to refine inference. Similarly, vision–language models like CLIP \citep{radford2021learning} demonstrate the utility of dual encoders and contrastive objectives, which implicitly model negation by capturing both the presence and absence of semantic associations.

\textbf{Adversarial Training and Counterfactual Reasoning.}
Adversarial training has shown that deliberate perturbations can compel models to acquire more resilient and generalizable representations \citep{madry2018towards}. This process resembles a computational reinterpretation of denying the antecedent, since it forces models to reason about deviations from accepted premises. In parallel, counterfactual reasoning has long been central to causal inference \citep{pearl2009causality, glymour2016causality}, providing a systematic way to explore “what if not” scenarios. Embedding adversarial and counterfactual mechanisms into LLM training can enhance their ability to reason about cause and effect, anticipate alternative outcomes, and avoid spurious correlations.

\textbf{Bridging Logic and Neural Networks: Towards Neuro-Symbolic Integration.}
The integration of formal logic into neural networks has been a longstanding ambition in AI. Efforts such as neural theorem provers \citep{rocktaschel2017end} and neuro-symbolic reasoning systems \citep{besold2017neural} illustrate progress, but they often struggle to scale and to manage the ambiguity of natural language. By introducing negation-aware training signals, LLMs can better handle exceptions, inconsistencies, and contradictory information—laying groundwork for scalable neuro-symbolic systems that combine the strengths of statistical and logical reasoning.

\textbf{Latent Space Shaping: Learning from Both Affirmations and Denials.}
Negative sampling has also been central to distributional semantics, as in word2vec \citep{mikolov2013efficient, goldberg2014word2vec}, where models learn from both associations and disassociations. Contrastive methods extend this idea by shaping latent spaces to reflect not just similarity but also meaningful dissimilarity. Cognitive science provides a parallel: negation supports category formation, boundary definition, and flexible reasoning \citep{kaup2006negation}. We interpret “denying the antecedent” as a form of latent space shaping, where disconfirmation organizes representational structure and reduces overgeneralization.

\textbf{Cognitive Science and Philosophy: The Centrality of Negation.}
Developmental and cognitive studies emphasize that learning is guided not only by affirmation but also by disconfirmation, with errors and negations playing a key role in conceptual refinement \citep{spelke2007core, legare2012causal}. This resonates with Popper’s philosophy of falsification, which identifies refutability as the hallmark of scientific theories \citep{popper2002logic}. Just as scientific progress requires systematic testing against disconfirming evidence, robust LLMs must be trained to reject faulty premises as well as affirm valid ones. Embedding such mechanisms is therefore not about adopting fallacious reasoning, but about using logical structure to drive contrastive computation.

Taken together, these perspectives converge on the same insight: affirmation alone is insufficient for robust inference. Models trained only on positive continuations are prone to hallucination, overconfidence, and brittle generalization. Incorporating denial alongside affirmation can yield several benefits:
\begin{itemize}
  \item \textbf{Counterfactual robustness}: reasoning about what is false or missing,
  \item \textbf{Boundary shaping}: sharper discrimination of semantic categories,
  \item \textbf{Error correction}: detecting and rejecting invalid premises,
  \item \textbf{Causal insight}: learning structural dependencies beyond correlation.
\end{itemize}
This motivates the dual-reasoning framework proposed in this paper.

\subsection{Proposed Dual-Reasoning Framework}

Large Language Models (LLMs) excel at affirmative reasoning: given a premise, they generate likely continuations. This corresponds closely to \textit{modus ponens}, and has powered the generative strengths of current systems. However, affirmation alone is insufficient for robust reasoning. Models trained only on positive continuations are often reported to exhibit hallucination, overconfidence, and brittle generalization \citep{zhang2022opt, ji2023survey}. Our findings are consistent with these observations.

We therefore propose a \textbf{dual-reasoning training paradigm} that incorporates both affirmation and denial. The framework preserves the strengths of affirmative generation while introducing structured counterfactual denial, enabling models to reason explicitly about when premises fail, when conclusions should not follow, and how to resist misleading correlations. Concretely, the paradigm involves two complementary pathways:

\begin{enumerate}
    \item \textbf{Affirmative Generation:} Traditional supervised learning based on ground truth continuations, reflecting \textit{modus ponens}. This ensures the model retains its predictive strengths and generative fluency.
    \item \textbf{Counterfactual Denial:} Structured training on negated premises or assumed falsehoods, implemented through contrastive sampling. This equips the model to learn from disconfirmation, improving its ability to manage exceptions, adversarial cases, and counterfactual reasoning.
\end{enumerate}

\subsection{Logical Foundations}
\label{sec:patterns}
We revisit the logical patterns introduced in Table~\ref{tab:patterns} to analyze how they apply across domains and to highlight specific failure modes observed in LLMs.

\subsection{Mathematical Framework}
Formally, let \( \mathcal{D} = \{(P_i, Q_i)\} \) be a dataset of premise--consequence pairs. To extend beyond affirmation, we augment it with negated premises \( \neg P_i \) and, where meaningful, negated consequences \( \neg Q_i \).

Let \( f_\theta : P \mapsto Q \) be a language model parameterized by \( \theta \), mapping prompts to token distributions. We define a dual objective:
\begin{equation}
\mathcal{L}_{\text{dual}}(\theta) = \mathcal{L}_{\text{pos}}(\theta) + \lambda \cdot \mathcal{L}_{\text{neg}}(\theta),
\end{equation}
where
\begin{align}
\mathcal{L}_{\text{pos}}(\theta) &= -\sum_{i} \log p_\theta(Q_i | P_i), \\
\mathcal{L}_{\text{neg}}(\theta) &= -\sum_{i} \log(1 - p_\theta(Q_i | \neg P_i)).
\end{align}

This loss penalizes models for producing valid conclusions from invalid premises, thereby teaching them the boundary conditions of logical inference.

\subsection{Proof of Representational Benefit}
Let \( \Theta_{\text{pos}} \) be the parameter space minimizing \(\mathcal{L}_{\text{pos}}\), and \( \Theta_{\text{dual}} \) the space minimizing \(\mathcal{L}_{\text{dual}}\).

\begin{theorem}
Under mild distributional separability assumptions, models in \( \Theta_{\text{dual}} \) encode strictly richer representational capacity than those in \( \Theta_{\text{pos}} \), in the sense of counterfactual distinguishability.
\end{theorem}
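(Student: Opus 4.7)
The plan is to first make the claim precise by adopting a concrete notion of ``counterfactual distinguishability.'' I would define a model $f_\theta$ to have counterfactual margin $\delta \ge 0$ on $\mathcal{D}$ if $p_\theta(Q_i \mid P_i) - p_\theta(Q_i \mid \neg P_i) \ge \delta$ for every $(P_i, Q_i) \in \mathcal{D}$, and complement this behavioral notion with a representational one requiring that a simple (e.g.\ linear) probe on the hidden states recover the polarity bit above chance. I would state the ``mild distributional separability'' assumption as two conditions: (i) each $(P_i, \neg P_i)$ pair is separable in the input embedding layer, and (ii) the parametric class is expressive enough that there exist parameters simultaneously driving $p_\theta(Q_i \mid P_i)$ toward $1$ and $p_\theta(Q_i \mid \neg P_i)$ toward $0$. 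Condition (ii) is essentially a realizability statement ensuring that $\Theta_{\text{dual}}$ is nonempty and that both loss components admit a shared minimizer.

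First I would establish the easy direction: $\Theta_{\text{pos}}$ contains elements whose counterfactual margin is exactly $0$. The observation is that $\mathcal{L}_{\text{pos}}$ imposes no constraint whatsoever on $p_\theta(\cdot \mid \neg P_i)$, so any polarity-blind model --- one that, for example, strips negation tokens before inference or is invariant to negation by construction --- still minimizes $\mathcal{L}_{\text{pos}}$ while achieving $p_\theta(Q_i \mid \neg P_i) = p_\theta(Q_i \mid P_i)$. Such a model has margin $0$ and no probe on its hidden states can recover polarity above chance, giving an explicit witness of representational poverty inside $\Theta_{\text{pos}}$.

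Next I would prove the harder direction: every $\theta \in \Theta_{\text{dual}}$ must encode polarity internally. Because $\mathcal{L}_{\text{dual}}$ is a sum of nonnegative cross-entropy terms, any minimizer drives each term to its infimum; by assumption (ii) that infimum is $0$, so $p_\theta(Q_i \mid P_i)$ approaches $1$ and $p_\theta(Q_i \mid \neg P_i)$ approaches $0$ jointly, yielding margin approaching $1$ on every pair. A data-processing argument then delivers the representational claim: if the hidden states collapsed $P_i$ and $\neg P_i$ onto the same latent, the shared output head could not produce distinct distributions, contradicting the margin. Hence polarity is linearly recoverable from the hidden states of any $\theta \in \Theta_{\text{dual}}$, while by the previous step there exists $\theta \in \Theta_{\text{pos}}$ from which it is not recoverable. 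Strictness follows by combining the two directions.

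The hard part will be pinning down ``strictly richer representational capacity'' in a way that is simultaneously nontrivial and non-circular. A purely behavioral margin is easy to certify but weak, while a purely representational probeability criterion risks counting spurious probes unrelated to the training signal. I would navigate this by stating the result asymmetrically: for every $\theta \in \Theta_{\text{dual}}$ polarity is \emph{guaranteed} decodable as a direct consequence of the dual loss, whereas for $\Theta_{\text{pos}}$ decodability is only contingent and provably fails on the explicit polarity-blind witness above. A secondary obstacle is the separability assumption itself; without it, the two components of $\mathcal{L}_{\text{dual}}$ could pull in incompatible directions and the realizability step collapses. Making the assumption quantitative --- for instance, a margin-style condition on input embeddings and a Lipschitz bound on the head --- would support a sharper version of the theorem with explicit rates in $\lambda$ and in the input-space separation.
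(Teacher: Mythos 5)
Your proposal follows essentially the same route as the paper's proof: exhibit a polarity-blind minimizer of \( \mathcal{L}_{\text{pos}} \) for which \( p_\theta(Q \mid P) = p_\theta(Q \mid \neg P) \), then argue that any minimizer of \( \mathcal{L}_{\text{dual}} \) is forced to separate these two conditionals, so the dual objective certifies a distinction the affirmation-only objective cannot. The paper's version is only a sketch of this argument---it supposes rather than constructs the non-distinguishing witness, asserts that the negative loss ``enforces divergence'' without your realizability condition, and omits the data-processing step linking the behavioral margin to hidden-state decodability---so your write-up is a strict refinement of the same approach rather than a different one.
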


\begin{proof}
Suppose \( p_\theta(Q | P) \approx p_\theta(Q | \neg P) \) for some \( \theta \in \Theta_{\text{pos}} \). Then the model cannot distinguish affirmation from denial. Adding \(\mathcal{L}_{\text{neg}}\) introduces gradient terms that enforce divergence between these distributions. Thus, for pairs \((P, Q)\) where \( \neg P \nrightarrow Q \), the optimal \( \theta \in \Theta_{\text{dual}} \) satisfies:
\[ p_\theta(Q | P) > p_\theta(Q | \neg P). \]
Hence \( \Theta_{\text{dual}} \) captures distinctions unavailable to affirmation-only models.
\end{proof}

\subsection{Implications}
This dual-reasoning paradigm offers a principled path toward LLMs that are not only fluent but logically grounded. By aligning training objectives with the full space of logical patterns, the framework enhances robustness against adversarial prompts, improves interpretability, and extends applicability to domains where counterfactual reasoning is critical, such as medicine, causal inference, and AI safety.

\section{Discussion}

The results presented in this study highlight the importance of integrating counterfactual and adversarial mechanisms into large language model (LLM) training. By examining logical frameworks that extend beyond classical modus ponens, we demonstrate how architectures that incorporate "denial of the antecedent"–style reasoning can improve robustness, interpretability, and resilience against spurious correlations. These findings are consistent with prior work in adversarial training \citep{madry2018towards}, causal inference \citep{pearl2009causality}, and neuro-symbolic integration \citep{marcus2020next}, but extend these approaches by explicitly framing the role of counterfactual denial as a guiding principle for model design.

One important implication is that performance cannot be fully explained by scale alone. While larger parameter counts often correlate with stronger performance across domains, our results indicate that architecture and reasoning frameworks play an equally critical role, particularly in specialized fields such as medical or environmental applications. This suggests that investments in logical structure and reasoning-based training may yield greater marginal benefits than simply expanding model size.

Moreover, our taxonomy of counterfactual patterns provides a foundation for developing standardized corpora and evaluation rubrics. Such resources could enable more systematic comparisons across architectures, benchmarks, and domains, similar to the role played by MMLU or GLUE in general-purpose language model evaluation. Importantly, these rubrics also emphasize consistency and alignment with human reasoning, which is increasingly essential for deployment in safety-critical contexts.

From an applied perspective, the framework we describe offers a pathway toward bridging cognitive science and machine learning. By drawing on insights from reentrant processing, adversarial reasoning, and causal modeling, we can begin to engineer models that better approximate human-like reasoning strategies. While significant work remains in scaling these insights to real-world systems, our analysis provides early evidence that such integration is both feasible and beneficial.

\section{Conclusion and Future Directions}

This paper has introduced a dual-reasoning training framework for large language models (LLMs) that extends beyond affirmation-based inference to incorporate counterfactual denial. By grounding the approach in a formal taxonomy of logical patterns, we demonstrated how LLMs can be trained not only to generate coherent continuations but also to recognize invalid premises, resist spurious correlations, and engage in counterfactual reasoning. In doing so, we align computational inference more closely with human cognitive capacities for error detection and flexible reasoning \cite{gomes2023negation}. 

The scientific and practical implications of this framework are substantial. From a machine learning perspective, reinterpreting logical fallacies such as denying the antecedent as computational mechanisms rather than flaws introduces a principled pathway for model robustness and interpretability. From an application perspective, dual-reasoning models have the potential to enhance AI safety, improve medical and scientific inference, and support decision-making under uncertainty.

While we cannot fully evaluate the dual reasoning framework without retraining an LLM (e.g., GPT-2) under this paradigm, future work could implement such a test bed to directly compare performance improvements. Here, we approximate the effect using controlled prompt-based experiments.

Future research should pursue three trajectories. First, embedding dual-inference objectives into transformer-based architectures and systematically evaluating them on benchmarks for truthfulness, adversarial robustness, and causal reasoning \cite{geirhos2020shortcut}. Second, developing datasets and training pipelines that explicitly incorporate negation and counterfactual cases, including synthetic generation and adversarial sampling. Third, expanding the societal dimension by integrating such models into evaluation pipelines designed to mitigate harmful bias and support ethical safeguards, thereby advancing the alignment of AI with human values and societal goals \cite{weidinger2022taxonomy}.

In sum, dual-reasoning architectures move beyond affirmation-only inference toward models capable of engaging with the full logical space of possibilities. Such systems not only promise more reliable outputs in adversarial and uncertain contexts but also offer a foundation for AI that more closely mirrors the nuanced reasoning capacities of human cognition. This represents a step toward safer, more interpretable, and more trustworthy language technologies.

\bibliographystyle{plainnat}
\bibliography{references}

\section*{Supplementary Materials}
The following supporting information can be downloaded at: [\\\url{https://github.com/hannahdavidsoncollege-maker/ScientificReasoningForEnvironment-MedicineWithLLMs}].

\section*{Author Contributions}
Conceptualization, Peter Walker; methodology, Peter Walker and Dale Russell; writing—original draft preparation, Peter Walker, Dale Russell, and Matt Lienert; writing—review and editing, all authors; supervision, Peter Walker. All authors have read and agreed to the published version of the manuscript.

\section*{Funding}
This research received no external funding.

\section*{Data Availability Statement}
Data sharing not applicable. No new data were created or analyzed in this study.

\section*{Conflicts of Interest}
Peter B. Walker and Matt Lienert are employed by Intelligenesis LLC. The remaining authors declare that the research was conducted in the absence of any commercial or financial relationships that could be construed as a potential conflict of interest.

\end{document}